\newcommand{\norm}[1]{\|#1\|}
\newcommand{\one}{\mathbbm{1}}
\theoremstyle{plain}
\newtheorem{proposition}{Proposition}
\title{Advancing Regular Language Reasoning in \\Linear Recurrent Neural Networks}
\author{
  Ting-Han Fan$^*$ \\
  Independent Researcher \\
  \texttt{tinghanf@alumni.princeton.edu} \\\AND
  Ta-Chung Chi$^*$ \\
  Carnegie Mellon University \\
  \texttt{tachungc@andrew.cmu.edu} \\\And
  Alexander I. Rudnicky \\
  Carnegie Mellon University \\
  \texttt{air@cs.cmu.edu}
}
\begin{document}

\maketitle

\begin{NoHyper}
\def\thefootnote{$^*$}\footnotetext{Equal contribution}
\end{NoHyper}
\begin{abstract}
In recent studies, linear recurrent neural networks (LRNNs) have achieved Transformer-level performance in natural language and long-range modeling, while offering rapid parallel training and constant inference cost. With the resurgence of interest in LRNNs, we study whether they can learn the hidden rules in training sequences, such as the grammatical structures of regular language. We theoretically analyze some existing LRNNs and discover their limitations in modeling regular language. Motivated by this analysis, we propose a new LRNN equipped with a block-diagonal and input-dependent transition matrix. Experiments suggest that the proposed model is the only LRNN capable of performing length extrapolation on regular language tasks such as Sum, Even Pair, and Modular Arithmetic. The code is released at \url{https://github.com/tinghanf/RegluarLRNN}.
\end{abstract}


\section{Introduction}
There is a recent surge in the use of LRNNs~\cite{gu2022s4,peng2023rwkv,Antonio2023resurrect} as alternatives to the de-facto Transformer architecture~\cite{vaswani2017attention,radford2019language}, which is ingrained in the field of natural language processing. LRNNs depart from the inter-timestep non-linearity design principle of classic RNNs~\cite{elman1990finding,jordan1997serial,hochreiter1997long,cho-etal-2014-learning}, while at the same time:
\begin{enumerate*}
    \item achieving Transformer-level performance on the task of natural language modeling~\cite{fu2023hungry,poli2023hyena} and even better performance on synthetic long-range modeling tasks~\cite{gu2022s4,gupta2022s4d,Antonio2023resurrect,hasani2023liquids4,smith2023s5}.
    \item having the added benefits of fast parallelizable training~\cite{martin2018parallelizing} and constant inference cost.
\end{enumerate*}

In spite of the remarkable empirical performance on natural language tasks, there has been no research on LRNNs' ability to model regular language. Regular language is a type of language that strictly follows certain rules like grammar.\footnote{Formally speaking, the rules are defined/recognized by the underlying finite-state machine.} The successful modeling of a regular language is important since it implies a model's ability to learn the underlying rules of the data. For example, if the training data are arithmetic operations such as $1+2\times 3$, a model should learn the rules of $a+b$, $a\times b$, and that $\times$ has a higher priority than $+$. Learning unambiguous rules behind the data is a critical step toward sequence modeling with regulated output.

In this paper, we aim to determine if existing LRNNs are competent to learn the correct grammar of regular language by testing their language transduction capability under the length extrapolation setting. Concretely, a model is trained only to predict the desired outputs on a set of short sequences of length $L_{tr}$. It then needs to predict the correct outputs for longer testing sequences of length $L_{ex}\gg L_{tr}$. Adopting the length extrapolation setting is essential to mitigate the risk of a model learning spurious shortcut solutions~\cite{liu2023transformers}.

We theoretically show that some of the recently proposed LRNNs lack the expressiveness to encode certain arithmetic operations used in the tasks of regular language. In light of this observation, we propose a new LRNN equipped with a block-diagonal and input-dependent transition matrix, which enable the successful modeling of regular language. Experiments show that the proposed model is the only LRNN architecture that can extrapolate well on regular language tasks such as Sum, Even Pair, and Modular Arithmetic.

LRNNs in this work have the following general formulation:
\begin{equation}
\begin{split}
    &x_k = A_k x_{k-1} + Bu_k\\
    &y_k = h(x_k).
    \label{eq:lrnn-general}
\end{split}
\end{equation}
$A_k$ is a matrix that defines the recurrence relation. $A_k$ may or may not depend on the input $u_k$. When it is input-independent, $A_k$ is reduced to $A$; otherwise, $A_k=g(u_k)$ for some function $g$. The first line encodes a linear recurrence in the state $x_k$. The second line is an output $y_k$ that depends on $x_k$. To control the expressiveness, the function $h$ may or may not be a linear operation.
Since the existing LRNNs differ in their linear recurrence relations (Eq.~\eqref{eq:lrnn},~\eqref{eq:dlrnn}, and \eqref{eq:liquid-recur}), we mainly focus on analyzing these relations.

\section{Limitations of Most LRNNs}
\label{sec:prior-limitation}
In this section, we theoretically show that most LRNNs are unable to represent arithmetic operations. The analysis serves as a motivation to study input-dependent transition matrices with constraints on their column norm.

\subsection{Input-independent LRNN}
\label{sec:in-indep}
To begin with, state-space models (in discrete-time format) follow the standard LRNN recurrence relation:
\begin{equation}
    x_k = A x_{k-1} + B u_k
    \label{eq:lrnn}
\end{equation}
Eq.~(\ref{eq:lrnn}) encapsulates the recurrence relation of S4~\cite{gu2022s4, gupta2022s4d}, S5~\cite{smith2023s5}, and Linear Recurrent Unit~\cite{Antonio2023resurrect}. For example, $A$ represents the HiPPO matrix family \cite{gu2023hippo} of S4 or a complex diagonal matrix of Linear Recurrent Unit. We show in Proposition~\ref{prop:input-indep} that such an input-independent matrix $A$ cannot represent subtraction.
\begin{proposition}
    An input-independent LRNN is inconsistent in representing subtraction.
    \label{prop:input-indep}
\end{proposition}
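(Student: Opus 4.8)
The plan is to isolate the input-independence of $A$ as the sole culprit by showing that no choice of the remaining ingredients, namely $B$ and the readout $h$ (linear or not), can repair it. First I would unroll Eq.~\eqref{eq:lrnn} from $x_0 = 0$ to obtain $x_n = \sum_{j=1}^{n} A^{n-j} B\,u_j$, so that the internal state is a \emph{fixed} linear image of the input sequence whose dependence on position $j$ enters only through the offset $n-j$. I then formalize ``subtraction'' as the left-fold running difference $u_1 - u_2 - \cdots - u_n$, which, in the length-extrapolation regime, one fixed LRNN must output for \emph{every} length $n$. The strategy is a collision argument: exhibit two inputs of different lengths that are driven to the identical state yet are required to decode to different numbers.

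The key step is to prepend a leading zero. Fix any $v_1 \neq 0$ and any $v_2$, and compare the length-$2$ input $(v_1, v_2)$ with the length-$3$ input $(0, v_1, v_2)$. Unrolling gives state $A B\,v_1 + B\,v_2$ in the first case and $A^2 B\cdot 0 + A B\,v_1 + B\,v_2 = A B\,v_1 + B\,v_2$ in the second, so both inputs produce exactly the same state $x$. Since $y_n = h(x)$ is single-valued, the two inputs must share an output; yet subtraction demands $v_1 - v_2$ for the former and $0 - v_1 - v_2 = -v_1 - v_2$ for the latter. Equating these forces $v_1 - v_2 = -v_1 - v_2$, i.e.\ $v_1 = 0$, contradicting $v_1 \neq 0$. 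This is exactly the claimed inconsistency, and because the argument never inspects $B$ or $h$ beyond single-valuedness, it pins the failure squarely on the position-only, input-independent action of $A$.

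The step I expect to require the most care is the readout $h$, which the paper permits to be nonlinear: a naive approach that matches the readout of the per-position gains $A^{n-j}B$ to the target signs $+1, -1, \dots, -1$ would stumble once $h$ is nonlinear, since the effective per-input gain then also depends on the state. The collision argument above is designed precisely to sidestep this, as it uses only that a function returns one value per input. The remaining subtleties are bookkeeping: verifying that $0$ is an admissible operand so that the prepended token is legitimate, and noting that a nonzero $x_0$ merely adds the term $A^{n}x_0$, which either vanishes under the standard $x_0 = 0$ initialization or can be neutralized by matching it across the two lengths. Finally, this isolation of the cause anticipates the paper's remedy: an input-\emph{dependent} $A_k = g(u_k)$ would let a ``minus'' token act multiplicatively and flip the sign of whatever follows, which is exactly the degree of freedom the collision argument shows to be missing here.
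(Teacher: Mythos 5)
There is a genuine gap, and it sits exactly where you placed the least suspicion: the prepended zero. Your collision between $(v_1,v_2)$ and $(0,v_1,v_2)$ requires $A^2B\cdot 0 = 0$, i.e.\ that the operand $0$ contributes nothing to the state. That holds only if the model is fed the raw numeric value so that the input enters linearly through $B$. In the paper's setting (and in the proof of Proposition~\ref{prop:input-indep}), each character is mapped to an \emph{arbitrary} learned vector --- the proof writes $u_0$, $u_-$, $u_1$ for the vectors attached to the tokens \texttt{0}, \texttt{-}, \texttt{1} --- and there is no reason for the embedding of the numeral $0$ to be the zero vector. With a generic $u_{(0)}$, the second state acquires the extra term $A^2 u_{(0)} \neq 0$, the two inputs no longer collide, and the contradiction evaporates. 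So your argument rules out only the restricted subclass of input-independent LRNNs whose zero token embeds to $0$; the proposition claims failure for every choice of embeddings. Worse, the strategy cannot be repaired in the general setting: the state of a string is the formal sum $A^n z + \sum_j A^{n-j}u_{(s_j)}$, and for arbitrary $A$ and arbitrary token vectors, two distinct strings never produce \emph{forcibly} identical sums, so no single forced state collision exists to exploit. (Your formalization also drops the explicit \texttt{-} token and compares strings of different lengths, which makes the $A^n z$ terms mismatch unless $z=0$; these are minor by comparison.)

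The paper therefore argues differently: it derives two semantic constraints on the \emph{same} suffix expression and shows they clash. Since $0-1 \neq 1-0$, the states of \texttt{0-1} and \texttt{1-0} must differ, forcing $A^2u_0 + Au_- + u_1 \neq A^2u_1 + Au_- + u_0$; since $0-0-1 = 0-1-0$, the states of \texttt{0-0-1} and \texttt{0-1-0} must agree, forcing the same two expressions to be equal --- a contradiction for any $A$, $z$, and embeddings. It is worth noting that your ``same state hence same output'' direction is the logically cleaner one (it uses only that $h$ is a function), whereas the paper's equality step implicitly assumes that equal values must be represented by equal states rather than merely equal outputs; but that robustness does not compensate for the fact that your collision is not actually forced. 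To salvage your write-up you would need to either restrict the claim to scalar-valued inputs entering through $B$, or switch to the paper's two-pair argument.
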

\begin{proof}
    Denote $u_0$, $u_-$, and $u_1$ as the input vector w.r.t. input characters 0, -, and 1. Denote $z$ as the initial state vector. The sequences "0-1" and "1-0" are represented as
    \begin{align*}
        &x_{0-1} = A^3z + A^2u_0 + Au_- + u_1,~~~\text{for~"0-1"}\\
        &x_{1-0} = A^3z + A^2u_1 + Au_- + u_0,~~~\text{for~"1-0"}
    \end{align*}
    Because $0-1\neq 1-0$, by forcing $x_{0-1}\neq x_{1-0}$, we have
    $$A^2u_0 + Au_- + u_1\neq A^2u_1 + Au_- + u_0.$$
    On the other hand, let $x_{0-}=A^2z+Au_0 + u_-$ be the vector representation for "0-". The sequences "0-0-1" and "0-1-0" are represented as
    \begin{align*}
        &x_{0-0-1}=A^3 x_{0-} + A^2u_0 + A u_- + u_1\\
        &x_{0-1-0}=A^3 x_{0-} + A^2u_1 + A u_- + u_0.
    \end{align*}
    Notice $x_{0-0-1}$ is for "0-0-1" while $x_{0-1-0}$ for "0-1-0". Enforcing $x_{0-0-1}= x_{0-1-0}$, we have
    $$A^2u_0 + Au_- + u_1= A^2u_1 + Au_- + u_0,$$
    which is a contradiction.
\end{proof}
The limitation described by Proposition~\ref{prop:input-indep} also applies to models adopting diagonal linear recurrence relations~\cite{gupta2022s4d,smith2023s5,Antonio2023resurrect}. The failure to represent regular language will be corroborated by the inferior length extrapolation performance reported later in \S~\ref{sec:experi}.

\section{Proposed Method}
Now that input-independent LRNNs struggle with representing arithmetic operations, we review the paradigms known to model regular language, which is the type of formal language recognized by a Finite State Automata (FSA)~\cite{chomsky1956three}. An FSA is described by a 5-tuple $(Q,\Sigma,\delta,q_0,F)$. $Q$ and $\Sigma$ are non-empty sets of states and input symbols. $q_0\in Q$ is an initial state. $\delta:Q\times\Sigma\to Q$ is an input-dependent transition function; $F\subseteq Q$ is a set of final states.

We hypothesize that an LRNN could model regular language if it can simulate an FSA, whose transition function has the following two key properties:
\begin{itemize}
    \item It is input-dependent.
    \item If represented in the matrix form, its column vectors all have unit norm (in $\|\cdot\|_1$).
\end{itemize}

\subsection{Diagonal Input-dependent LRNN}
\label{sec:dlrnn}
Let us first examine the simplest input-dependent LRNN:
\begin{equation}
    x_k = \text{diag}(v_k) x_{k-1} + B u_k,
    \label{eq:dlrnn}
\end{equation}
where $v_k=f(u_k)$ is a vector that depends on $u_k$. Unfortunately, we show that a diagonal input-dependent LRNN still cannot represent subtraction in Proposition~\ref{prop:dlrnn}.

\begin{proposition}
A diagonal input-dependent LRNN is inconsistent in representing subtraction.
\label{prop:dlrnn}
\end{proposition}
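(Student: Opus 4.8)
The plan is to reuse the structure of the proof of Proposition~\ref{prop:input-indep}, replacing the single matrix $A$ by the three input-dependent diagonal matrices $D_0=\text{diag}(f(u_0))$, $D_-=\text{diag}(f(u_-))$, and $D_1=\text{diag}(f(u_1))$, and to exploit the crucial fact that diagonal matrices commute. First I would unroll the recurrence in Eq.~\eqref{eq:dlrnn} for the strings "0-1" and "1-0", obtaining states of the form
\begin{align*}
    &x_{0-1}=D_1 D_- D_0\, z + D_1 D_-\, u_0 + D_1\, u_- + u_1,\\
    &x_{1-0}=D_0 D_- D_1\, z + D_0 D_-\, u_1 + D_0\, u_- + u_0.
\end{align*}
Because the $D_c$ are diagonal, $D_1 D_- D_0 = D_0 D_- D_1$, so the initial-state terms coincide. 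Since $0-1\neq 1-0$, forcing $x_{0-1}\neq x_{1-0}$ gives
$$D_1 D_-\, u_0 + D_1\, u_- + u_1 \neq D_0 D_-\, u_1 + D_0\, u_- + u_0.$$

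Next I would unroll the strings "0-0-1" and "0-1-0" from their shared prefix state $x_{0-}=D_- D_0\, z + D_-\, u_0 + u_-$, yielding
\begin{align*}
    &x_{0-0-1}=D_1 D_- D_0\, x_{0-} + D_1 D_-\, u_0 + D_1\, u_- + u_1,\\
    &x_{0-1-0}=D_0 D_- D_1\, x_{0-} + D_0 D_-\, u_1 + D_0\, u_- + u_0.
\end{align*}
Commutativity again equates the two prefix-dependent terms, $D_1 D_- D_0\, x_{0-}=D_0 D_- D_1\, x_{0-}$. Since $0-0-1=0-1-0=-1$, enforcing $x_{0-0-1}=x_{0-1-0}$ yields
$$D_1 D_-\, u_0 + D_1\, u_- + u_1 = D_0 D_-\, u_1 + D_0\, u_- + u_0,$$
which directly contradicts the inequality derived from the first pair of strings.

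The only real obstacle is recognizing that input-dependence fails to rescue the model: because each $D_c$ is diagonal, any reordering of the symbols "0", "1", and "-" produces the same matrix product, so the cancellations that doomed the input-independent case in Proposition~\ref{prop:input-indep} reappear essentially verbatim. Once commutativity is invoked, the remaining algebra is routine and mirrors the earlier proof. I expect the write-up to emphasize commutativity as the single structural reason the diagonal design is insufficient, which in turn motivates the block-diagonal (hence generally non-commuting) transition matrices proposed next.
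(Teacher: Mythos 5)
Your proposal is correct and matches the paper's own proof in Appendix~\ref{sec:appendix-dlrnn} essentially verbatim: the same string pairs (\texttt{0-1} vs.\ \texttt{1-0}, and \texttt{0-0-1} vs.\ \texttt{0-1-0}), the same use of commutativity of diagonal matrices to cancel the initial-state and prefix terms, and the same resulting contradiction. The only difference is notational ($D_c$ in place of the paper's $A_c$).
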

\noindent The proof is essentially a generalization of Proposition~\ref{prop:input-indep} and is deferred to Appendix~\ref{sec:appendix-dlrnn}.

\subsection{Improved Expressiveness: Liquid-S4}
\label{sec:liquid-motivation}
To improve the expressiveness of Eq.~(\ref{eq:dlrnn}), we note that the recently proposed liquid-S4~\cite{hasani2023liquids4} model has the following recurrence relation:
\begin{equation}
\begin{split}
    x_k&=Ax_{k-1} + (B u_k)\odot x_{k-1} + B u_k\\
    &=(A+\text{diag}(B u_k))x_{k-1}+B u_k,
    \label{eq:liquid-recur}
\end{split}
\end{equation}
where $\odot$ denotes the Hadamard product and $\text{diag}(w)$ constructs a diagonal matrix from $w$. Although Liquid-S4 does not suffer from the limitation outlined in Proposition~\ref{prop:dlrnn}, our experiments in \S~\ref{sec:experi_results} show that Liquid-S4 still cannot extrapolate on regular language tasks.

\subsection{Block-diagonal Input-dependent LRNN}
\label{sec:proposed}
Finally, we decide to push the expressiveness of $A_k$ to the limit and make it fully input-dependent:
\begin{equation}
    x_k = A_k x_{k-1} + B u_k,
    \label{eq:block-diag}
\end{equation}
where $A_k = g(u_k)$ is a block diagonal matrix in practice for the sake of efficiency. $A_k$ depends on $u_k$ but not previous timesteps. $g$ is an arbitrary function with the output being the size of $A_k$.

Eq.~\eqref{eq:block-diag} is numerically unstable because the product $\prod_{i=1}^k A_i$ could produce large numbers. The solution is to impose additional constraints on the norm of $A_k$:

\begin{equation}
\begin{split}
    &A_k = \text{diag}\left(A_k^{(1)},...,A_k^{(h)} \right) \in\mathbb{R}^{bh\times bh}\\
    &A_k^{(i)}= \begin{bmatrix}v_k^{(i,1)} & \dots & v_k^{(i,b)}\end{bmatrix} \in \mathbb{R}^{b\times b}\\
    &\norm{v_k^{(i,j)}}_p\leq 1,~~~i\in[1,...,h],~~~j\in[1,...,b], 
    \label{eq:pnorm-mat}
\end{split}
\end{equation}
where $\norm{\cdot}_p$ denotes the vector p-norm and $v_k^{(i,j)}$ is a column vector that depends on $u_k$. For any vector $v$, we can derive another vector $v'$ to satisfy the p-norm constraint through $v'=v / \max(1, \norm{v}_p)$. Because $\norm{v}_p\geq \norm{v}_q$ when $p\leq q$, a smaller $p$ imposes a stronger constraint on the columns of $A_k^{(i)}$. In other words, we can stabilize Eq.~\eqref{eq:block-diag} by selecting a sufficiently small $p$.

Take $p=1$ as an example. Every block $A_k^{(i)}$ is a matrix that none of its column norm is greater than 1 in $\norm{\cdot}_1$. This implies $A_{k+1}^{(i)}A_k^{(i)}$ is the same kind of matrix. Specifically, let $v^{(1)},...,v^{(b)}$ be the columns of $A_{k+1}^{(i)}A_k^{(i)}$. We have 
\begin{equation}
\begin{split}
    &\begin{bmatrix}\norm{v^{(1)}}_1 &\dots&\norm{v^{(b)}}_1\end{bmatrix}=\one^\top \left|A_{k+1}^{(i)}A_k^{(i)}\right|\\
    &\leq \one^\top \left|A_{k+1}^{(i)}\right|\left|A_k^{(i)}\right|
    \leq  \one^\top \left|A_k^{(i)}\right| \leq \one^\top.
    \label{eq:norm-1}
\end{split}
\end{equation}
Note that $\one$ is a column vector of all ones. $|\cdot|$ and $\leq$ are element-wise absolute value and inequality operations. The last two inequalities holds since the column norm of $A_{k+1}^{(i)}$ and $A_k^{(i)}$'s are no greater than 1 in $\norm{\cdot}_1$.

Eq.~\eqref{eq:norm-1} demonstrates that $p=1$ can stabilize the proposed block-diagonal recurrence, Eq.~\eqref{eq:block-diag}. However, a small $p$ restricts a model's expressiveness. In~\S~\ref{sec:experi_results}, we will show that $p=1.2$ is small enough to yield good empirical performance.

\subsection{Efficient Implementation via Parallel Scan}
\label{sec:pscan_algo}
\begin{figure*}
    \centering
    \includegraphics[width=\textwidth]{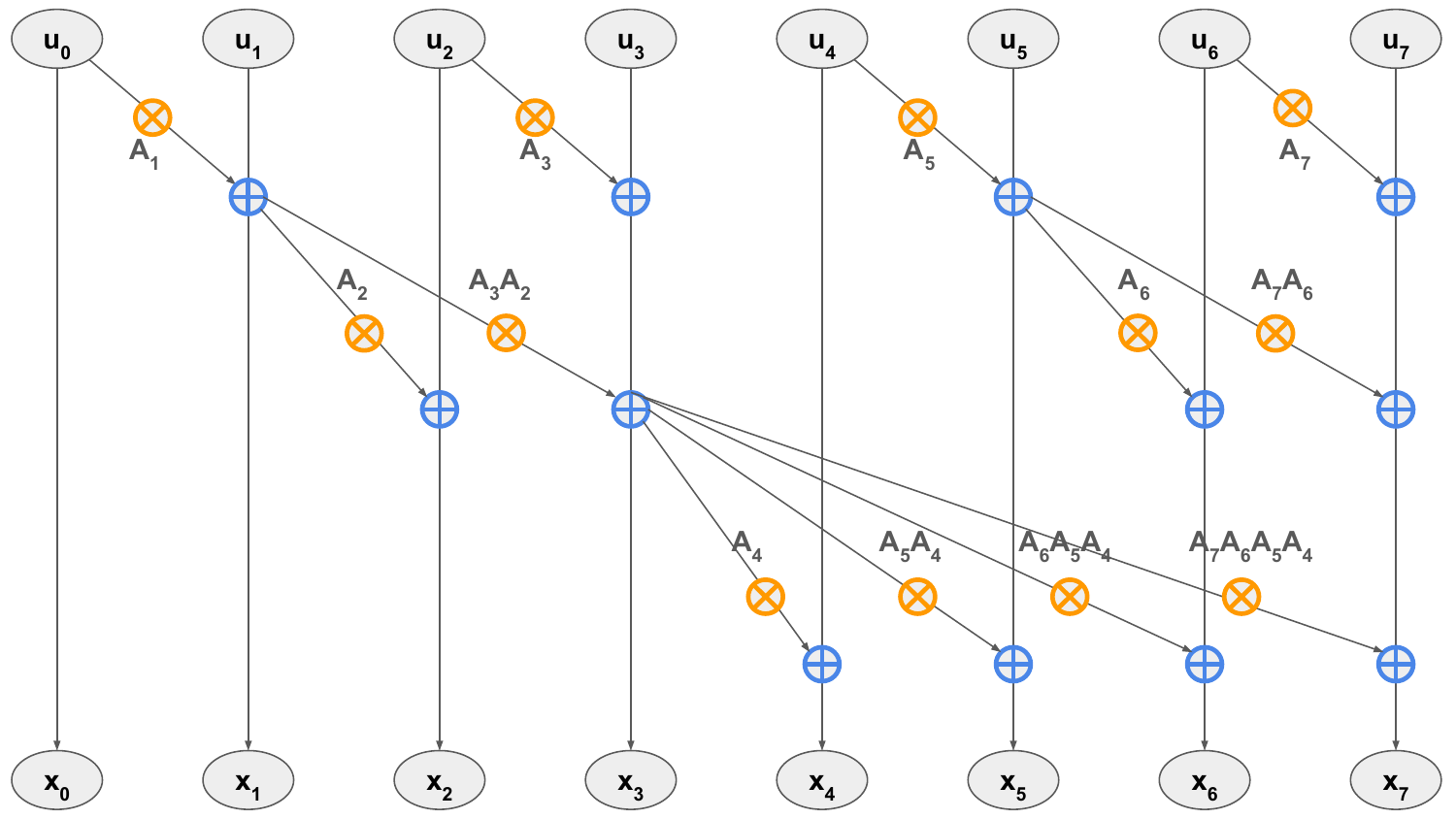}
    \caption{\textbf{Illustration of Parallel Scan for a length-8 generation.}}
    \label{fig:pscan}
\end{figure*}

We implement LRNNs in the parallel scan (\texttt{PScan}) mode as shown in Fig.~\ref{fig:pscan}. The idea of \texttt{PScan} is to group~\emph{similar} operations together, run them in parallel, and deliver the same results as those in the sequential (\texttt{Sequential}) for loop mode. For example, to compute $x_3=A_3A_2A_1u_0 + A_3A_2u_1+A_3u_2+u_3$, \texttt{Sequential} runs this in three steps. On the other hand, \texttt{PScan} decomposes the computation into two steps:
\begin{itemize}[leftmargin=*]
    \item Step 1: Compute $A_1u_0 + u_1$ and $A_3u_2 + u_3$. Because these two operations are similar, we can compute them in parallel.
    \item Step 2: $x_3=A_3A_2(A_1u_0 + u_1) + (A_3u_2 + u_3)$.
\end{itemize}

Generally speaking, a length-$L$ generation takes $\lceil\log_2 L\rceil$ steps using \texttt{PScan}. However, each step requires careful handling of the intermediate matrices. As illustrated in Fig.~\ref{fig:pscan}, for a length-8 generation, the first step requires $[A_1,A_3,A_5,A_7]$, the second step requires $[A_2,A_3A_2,A_6,A_7A_6]$, and the third step requires $[A_4,A_5A_4,A_6A_5A_4,A_7A_6A_5A_4]$. To this end,  we present an algorithm to generate the intermediate matrices in Appendix~\ref{appendix:illustration}. We integrate these intermediate matrices in \texttt{PScan} and show that \texttt{PScan} is equivalent to \texttt{Sequential} in Appendix~\ref{appendix:seq_vs_parallel}.

The computational complexity of our model is $O(b^3h\log(T))$, where $b$, $h$, and $T$ represent the block size, number of blocks, and sequence length, respectively. With the embedding dimension held fixed as $bh$, the complexity scales quadratically w.r.t the block size.

\section{Experiments}
\label{sec:experi}
\subsection{Regular Language Tasks}
We evaulate the models using the regular language transduction tasks introduced in~\citet{deletang2023Chomsky}. We prioritize language transduction over language recognition as the former can be more useful in practice~\citet{deletang2023Chomsky}. We are particularly interested in \textbf{Sum(5)}, \textbf{EvenPair(5)}, and \textbf{ModArith(5)}.

\paragraph{\textbf{Sum(M)}} The input is a string $\{s_i\}_{i=0}^{n-1}$ of numbers in $[0,..., M-1]$. The output is their sum modulo M: $\sum_{i=0}^{n-1} s_i~\text{mod}~M$. For example, when $M=5$, the input \texttt{0324} corresponds to the output \texttt{4} because $0+3+2+4~\text{mod}~5=4$. Notably, \textbf{Sum(2)} is the famous PARITY problem that evaluates whether there is an odd number of \texttt{1}s in a bit string. Thus, \textbf{Sum(M)} is a generalization of PARITY and shares the same characteristic: If one error occurs during the summation, the output will be wrong.

\paragraph{\textbf{EvenPair(M)}} The input is a string $\{s_i\}_{i=0}^{n-1}$ of numbers in $[0,..., M-1]$. The output is 1 if $s_{n-1}=s_0$ and 0 otherwise. For example, when $M=5$, the input \texttt{0320} corresponds to the output \texttt{1} because the first entry equals the last entry. Since \textbf{EvenPair(M)} only cares about the first and last entries, a model should learn to remember the first entry and forget the remaining ones $i\in[1,..,n-2]$.

\paragraph{\textbf{ModArith(M)}} The input is a string $\{s_i\}_{i=0}^{n-1}$ of odd length (i.e., $n$ is odd). The even entries ($i\in[0,2,...]$) are numbers in $[0,..., M-1]$; The odd entries ($i\in[1,3,...]$) are symbols in $\{+,-,\times\}$. The output is the answer of a mathematical expression under modulo M. For example, when $M=5$, the input \texttt{1+2-3$\times$4} corresponds to the output \texttt{1} because $1+2-3\times 4~\text{mod}~5=-9~\text{mod}~5=1.$ \textbf{ModArith(M)} is much more complicated than \textbf{Sum(M)} and \textbf{EvenPair(M)} because a model should learn to prioritize multiplication over addition and subtraction.

\subsection{Length Extrapolation}
In our pilot experiments, we discovered that all models can achieve near-perfect same-length testing accuracy; i.e., testing with $L_\text{ex}=L_\text{tr}$. This is not impossible since a large enough model can memorize all training sequences in its parameters.
To evaluate whether a model truly learns the underlying rules of a language, we first train a model on sequences of length $L_{\text{tr}}$ generated by an FSA; It is then evaluated on sequences of length $L_{\text{ex}} > L_{\text{tr}}$ generated by the same FSA.

Table~\ref{tab:extrp_settings} summarizes the extrapolation setting. We mostly follow the requirements in \citet{deletang2023Chomsky}, where the training and extrapolation lengths are 40 and 500. The lengths for \textbf{ModArith(5)} are 39 and 499 because this task requires odd-length inputs.

\begin{table}[!ht]
    \setlength{\tabcolsep}{4pt}
    \centering
    \begin{tabular}{lcccc}
    \hline\hline
    & \textbf{Sum(5)} & \textbf{EvenPair(5)} & \textbf{ModArith(5)}\\
    \hline
    $L_{\text{tr}}$& 40 & 40 & 39\\
    $L_{\text{ex}}$& 500 & 500 & 499\\
    \hline\hline 
    \end{tabular}
    \caption{\textbf{Training and Extrapolation Settings.} $L_{tr}$ and $L_{ex}$ represent the training and extrapolation sequence lengths, respectively.}
    \label{tab:extrp_settings}
\end{table}

\subsection{Baseline Models}
We select baseline LRNNs such as S4 \cite{gu2022s4}, S4D \cite{gupta2022s4d}, and Liquid-S4 \cite{hasani2023liquids4} using the released codebase\footnote{https://github.com/HazyResearch/state-spaces} under Apache-2.0 license.
These models are chosen since they are the most stable and theoretically grounded LRNN design thanks to the careful parameterization of their state transition matrices. We also experiment with RWKV \cite{peng2023rwkv} and a vanilla LRNN without S4's parameterization. Unfortunately, their performance lags behind S4 on the reported tasks.

\subsection{Experimental Results}
\label{sec:experi_results}
For the proposed method, we set $p=1.2$ in Eq.~\eqref{eq:pnorm-mat} and train the block-diagonal input-dependent LRNN with (b, h) = (8, 8). Because \textbf{ModArith} is more complicated than \textbf{Sum} and \textbf{EvenPair}, \textbf{ModArith} uses 3 layers while the others take 1 layer. Each layer is a full pass of LRNN as described in Eq.~\eqref{eq:lrnn-general}.

Table~\ref{tab:extrp_results} compares the length extrapolation capability of our model with other LRNN baselines on regular language tasks. As we can see, the proposed model is the only LRNN that can extrapolate well on regular language.
The inferior performance of S4 and S4D is expected since they cannot represent subtraction as illustrated in Prop.~\ref{prop:input-indep}. As for Liquid-S4, despite the usage of input-dependent block matrices (discussed in \S~\ref{sec:liquid-motivation}), it still cannot extrapolate well on regular language. We believe this can be explained by its low expressiveness (Eq.~\eqref{eq:liquid-recur}) compared to the proposed model (Eq.~\eqref{eq:block-diag} and \eqref{eq:pnorm-mat}). Overall, we can see that the combination of input dependency and sufficient expressiveness plays an important role in terms of regular language modeling.

\begin{table}[!ht]
    \setlength{\tabcolsep}{4pt}
    \centering
    \begin{tabular}{lcccc}
    \hline\hline
    & Ours & S4 & S4D & Liquid-S4\\
    \hline
    \textbf{Sum(5)} & 1.00 & 0.27 & 0.27 & 0.27\\
    \textbf{EvenPair(5)} & 0.99 & 0.81 & 0.82 & 0.72\\
    \textbf{ModArith(5)} & 1.00 & 0.27 & 0.27 & 0.27\\
    \hline\hline 
    \end{tabular}
    \caption{\textbf{Length Extrapolation Performance on Regular Language Tasks.} Each reported number is an average of five random trials. Each random trial returns the best testing accuracy over 40,000 gradient updates.}
    \label{tab:extrp_results}
\end{table}

\subsection{Speed Comparison}
\label{sec:speed}
We conduct our experiments using a Quadro RTX 8000 GPU. 
To provide context for the aforementioned complexity analysis in \S~\ref{sec:pscan_algo}, we take the \textbf{Sum(5)} task and set $T=40$ during the training stage. \texttt{Sequential} requires 0.033s per instance, while \texttt{PScan} completes the task in 0.021s. During the testing stage, we set $T=500$, where both \texttt{Sequential} and \texttt{PScan} take 0.03s per instance. One might anticipate \texttt{PScan} to outperform \texttt{Sequential} during testing. However, in practice, this is not the case, as the complexity incurred by $b^3$ counteracts the speedup offered by $\log(T)$. To validate our hypothesis, we set $b=1$ and reassess the speed. Subsequently, \texttt{PScan} achieves 0.0008s per instance, whereas \texttt{Sequential} takes 0.002s.
Regarding why \texttt{PScan} demonstrates a notable speedup during the training stage, we hypothesize that it is due to the improved backpropagation path enabled by \texttt{PScan}.




\section{Conclusion}
In this work, we explored LRNNs in the realm of regular language modeling. We discovered that existing LRNNs cannot effectively represent subtraction. Consequently, we proposed a new LRNN equipped with a block-diagonal and input-dependent transition matrix. Our experiments confirmed the proposed model's capability to model various regular language tasks, including Sum, Even Pair, and Modular Arithmetic, under the challenging length extrapolation setting.

\section*{Limitations}
The limitations of this work stem from several factors: (a) our evaluation is confined to only three regular language tasks; (b) the scope of our work excludes natural language; and (c) the proposed model introduces new hyperparameters such as the block size and the p-norm.

For (a), it is possible to discuss the average performance over randomly generated regular language, as demonstrated in \citet{valvoda2022rndLanguage}. Regarding (b), while natural language falls beyond the scope of our study, we believe the proposed model is at least as effective as prior linear RNN models on natural language, owing to its enhanced expressiveness. Concerning (c), the block size typically increases with the complexity of the problem. Nonetheless, it is feasible to maintain the same block size if more layers are employed (e.g., as described in \S~\ref{sec:experi_results}). Additionally, the p-norm parameter is chosen to be close to 1 to ensure stability; longer sequences correspond to smaller values of $p$.

\section*{Ethics Statement}
Our work lays the groundwork for developing LRNNs in underexplored languages, such as regular language. Inappropriate usage of our technique might have negative societal impacts, including potential losses due to wrong predictions and ethical challenges regarding the improper use of the model. These implications apply to most language processing research and are not unique to this specific work.


\bibliography{custom}
\bibliographystyle{acl_natbib}

\appendix

\section{Additional Proofs}
\label{sec:appendix}

\subsection{Proof of Proposition 2}
\label{sec:appendix-dlrnn}
Denote $(A_0,u_0)$, $(A_-,u_-)$, and $(A_1,u_1)$ as the pairs of (transition matrix, input vector) w.r.t. input characters $0$, $-$, and $1$. Note that $A_0$, $A_-$, and $A_1$ are diagonal matrices by assumption.
    
Denote $z$ as the initial state vector. The sequences \texttt{0-1} and \texttt{1-0} are represented as
\begin{align*}
    &x_{0-1} = A_1A_-A_0z + A_1A_-u_0 + A_1u_- + u_1\\
    &x_{1-0} = A_0A_-A_1z + A_0A_-u_1 + A_0u_- + u_0.
\end{align*}
Note that $x_{0-1}$ is \texttt{0-1} and $x_{1-0}$ is \texttt{1-0}.
Because the $A$ matrices are diagonal, we know $A_1A_-A_0=A_0A_-A_1$. Because $0-1\neq 1-0$, by enforcing $x_{0-1}\neq x_{1-0}$, we have
\begin{equation}
    A_1A_-u_0 + A_1u_- + u_1 \neq A_0A_-u_1 + A_0u_- + u_0.
    \label{eq:0-1vs1-0}
\end{equation}
On the other hand, let $x_{0-}=A_-A_0z+A_-u_0 + u_-$ be the vector representation for "0-". Consider two other sequences \texttt{0-0-1} and \texttt{0-1-0}, their vector representations are
\begin{align*}
&x_{0-0-1}=A_1A_-A_0 x_{0-} + A_1A_-u_0 + A_1 u_- + u_1\\
&x_{0-1-0}=A_0A_-A_1 x_{0-} + A_0A_-u_1 + A_0 u_- + u_0.
\end{align*}
Note $x_{0-0-1}$ is \texttt{0-0-1} and $x_{0-1-0}$ is \texttt{0-1-0}. Similarly, because the $A$ matrices are diagonal and $0-0-1= 0-1-0$, by enforcing $x_{0-0-1}=x_{0-1-0}$, we have
\begin{equation}
    A_1A_-u_0 + A_1u_- + u_1 = A_0A_-u_1 + A_0u_- + u_0.
    \label{eq:0-0-1vs0-1-0}
\end{equation}
Because Eq.~\eqref{eq:0-1vs1-0} contradicts Eq.~\eqref{eq:0-0-1vs0-1-0}, the two relations $x_{0-1}\neq x_{1-0}$ and $x_{0-0-1}=x_{0-1-0}$ cannot co-exist. We hence conclude that an input-dependent diagonal linear RNN is inconsistent in representing subtraction.

\subsection{Code for \texttt{PScan}}
\onecolumn
\subsubsection{Illustration of Matrix Generation}
\label{appendix:illustration}
\begin{minted}[
frame=lines,
framesep=2mm,
baselinestretch=1.2,
fontsize=\scriptsize,
]{python}
import numpy as np
seq_len = 2**3 - 1
arr = np.array(['A' + str(i) for i in range(1,seq_len +1)]).reshape(-1,1)

def spt(x):
    assert len(x)%2 == 1, 'works when len(x)== 2**k -1 for k>=1'
    coef = x[::2]
    remain = x[1::2]

    coef_remain = np.core.defchararray.add(coef[1:], remain[:,-1:])
    remain = np.concatenate([remain, coef_remain], axis=1)
    return coef, remain

for i in range( int(np.ceil(np.log2(seq_len))) ):
    coef, arr = spt(arr)
    print(coef)
\end{minted}

The below output shows the function \texttt{spt()} can generate the intermediate matrices during \texttt{PScan}.
{\small
\begin{lstlisting}[language=Python]
[['A1']
 ['A3']
 ['A5']
 ['A7']]
[['A2' 'A3A2']
 ['A6' 'A7A6']]
[['A4' 'A5A4' 'A6A5A4' 'A7A6A5A4']]
\end{lstlisting}
}

\subsubsection{Testing the Equivalence of \texttt{Sequential} and \texttt{PScan}}
\label{appendix:seq_vs_parallel}
\begin{minted}[
frame=lines,
framesep=2mm,
baselinestretch=1.2,
fontsize=\scriptsize,
]{python}
import numpy as np
import torch
import torch.nn as nn
torch.manual_seed(1)
emb_dim = 2
seq_len = 7
bs = 1
A = torch.randn(bs, seq_len, emb_dim, emb_dim)
u = torch.randn(bs, seq_len, emb_dim)
x0 = torch.randn(1, emb_dim)

# sequential
x = x0.expand(bs, emb_dim)
all_x = [x[:,None,:]]
for i in range(seq_len):
    x = torch.einsum('bij,bj->bi', A[:,i], x) + u[:,i]
    all_x.append(x[:,None,:])
all_x = torch.cat(all_x, dim=1)
print('sequential mode')
print(all_x)


# parallel scan
def scan(x, As):
    c = As.shape[2]*2
    x = x.view(bs, L//c, c, -1)
    x1, x2 = x[:,:,:c//2], x[:,:,c//2:]

    # x2.shape = (bs, group nums, group size, emb_dim)
    # As.shape = (bs, group nums*2-1, group size, emb_dim, emb_dim)

    assert As.shape[1]%2==1, 'works when As.shape[1]== 2**k -1 for k>=1'
    coef = As[:,::2]
    remain = As[:,1::2]
    prodd = torch.einsum('bncij,bnjk->bncik', coef[:,1:], remain[:,:,-1])
    remain = torch.cat([remain, prodd], dim=2)

    # coef.shape = (bs, group nums, group size, emb_dim, emb_dim)
    # apply a group of matrix (e.g., ['A2' 'A3A2']) to the last element of x2 in each group,
    # and add together
    x2 = x2 + torch.einsum('bncij,bnj->bnci', coef, x1[:,:,-1])
    x = torch.cat([x1, x2], dim=2)

    return x, remain

log2_L = int(np.ceil(np.log2(seq_len+1)))
L = 2**log2_L # the length after zero padding
n_zero = L - seq_len - 1
eu = torch.cat([x0.expand(bs,-1)[:,None,:], u], dim=1)
eu = nn.functional.pad(eu, (0,0,0, n_zero))


x = eu
As = nn.functional.pad(A, (0,0,0,0,0, n_zero))[:,:,None,:,:]

for i in range(log2_L):
    x, As = scan(x, As)
x = x.view(bs, L, emb_dim)[:,:seq_len+1,:]
print('parallel mode')
print(x)
\end{minted}

The below shows that \texttt{Sequential} and \texttt{PScan} are equivalent as they generate the same outputs.

{\small
\begin{lstlisting}[language=Python]
sequential mode
tensor([[[ 0.8310, -0.2477],
         [ 0.5167, -1.4218],
         [ 1.1399,  1.3024],
         [ 0.9628,  1.3150],
         [-1.5308, -1.6903],
         [-3.6631,  1.6082],
         [ 1.7805,  7.1659],
         [ 2.5068, -0.6256]]])
parallel mode
tensor([[[ 0.8310, -0.2477],
         [ 0.5167, -1.4218],
         [ 1.1399,  1.3024],
         [ 0.9628,  1.3150],
         [-1.5308, -1.6903],
         [-3.6631,  1.6082],
         [ 1.7805,  7.1659],
         [ 2.5068, -0.6256]]])
\end{lstlisting}
}
\end{document}